
\documentclass[twoside,leqno,twocolumn]{article}  
\usepackage{ltexpprt}

\usepackage{hyperref}
\usepackage{eurosym}
\usepackage{color}
\usepackage{graphicx}
\graphicspath{{./figures/}}

\usepackage[font=small,labelfont=bf]{caption}
\usepackage{subfig}
\usepackage{float}
\usepackage{algcompatible}

\usepackage{amsfonts}
\usepackage{amssymb}
\usepackage{amsmath}
\usepackage{mathrsfs}

\usepackage{algorithm}
\usepackage{algpseudocode}

\begin{document}

\title{\Large Uplift Modeling with Multiple Treatments and General Response Types}

\author{Yan Zhao\thanks{Department of Electrical Engineering and Computer Science, Massachusetts Institute of Technology. zhaoyanmit@gmail.com}  
\and Xiao Fang\thanks{Department of Electrical Engineering and Computer Science, Massachusetts Institute of Technology. ustcfx@gmail.com} 
\and David Simchi-Levi\thanks{Institute for Data, Systems, and Society, Department of Civil and Environmental Engineering, Operations Research Center,  Massachusetts Institute of Technology. dslevi@mit.edu}}

\date{}

\maketitle


\begin{abstract} \small\baselineskip=9pt 
Randomized experiments have been used to assist decision-making in many areas. They help people select the optimal treatment for the test population with certain statistical guarantee. However, subjects can show significant heterogeneity in response to treatments. The problem of customizing treatment assignment based on subject characteristics is known as uplift modeling, differential response analysis, or personalized treatment learning in literature. A key feature for uplift modeling is that the data is unlabeled. It is impossible to know whether the chosen treatment is optimal for an individual subject because response under alternative treatments is unobserved. This presents a challenge to both the training and the evaluation of uplift models. In this paper we  describe how to obtain an unbiased estimate of the key performance metric of an uplift model, the expected response. We present a new uplift algorithm which creates a forest of randomized trees. The trees are built with a splitting criterion designed to directly optimize their uplift performance based on the proposed evaluation method. Both the evaluation method and the algorithm apply to arbitrary number of treatments and general response types. Experimental results on  synthetic data and industry-provided data show that our algorithm leads to significant performance improvement over other applicable methods. \\

\noindent
\textbf{Accepted:} \emph{2017 SIAM International Conference on Data Mining (SDM2017)}
\end{abstract}

\section{Introduction}\label{sec1}

We often face the situation where we need to identify from a set of alternatives the candidate that leads to the most desirable outcome. For example, doctors want to know which treatment plan is the most effective for a certain disease. In an email marketing campaign, a company needs to select the message that yields the highest click through rate. Randomized experiments are frequently conducted to answer these questions. In such an experiment, subjects are randomly assigned to a treatment and their responses are recorded. Then by some statistical criteria, one treatment is selected as the best. While randomized experiments (also known as A/B testing in online settings) have been helpful in many areas, it has one major shortcoming - disregard for subject heterogeneity. A medical treatment that is the most effective over the entire patient population may be ineffective or even detrimental for patients with certain conditions. An email message that leads to the highest click-though-rate overall might tend to put off customers in some subpopulation. Therefore, it is of great interest to develop models that can correctly predict the optimal treatment based on given subject characteristics. This has been referred to as the personalized treatment selection problem or \textbf{Uplift Modeling} in the literature.

While appearing similar to classification problem at the first sight, uplift modeling poses unique challenges. In a randomized experiment, it is impossible to know whether the chosen treatment is optimal for any \emph{individual} subject because response under alternative treatments is unobserved. As a result, data collected from a randomized experiment is \emph{unlabeled} in the classification perspective because the true value of the quantity that we are trying to predict (the optimal treatment) is unknown even on the training data. 

Perhaps the most obvious approach to Uplift Modeling is what we call the Separate Model Approach (SMA). We first create one predictive model for each treatment. Given a new data, we can obtain the predicted response under each treatment with the corresponding model, and then select the treatment with the best predicted response. The main advantage of this approach is that it does not require development of new algorithms and software. Any conventional classification/regression method can be employed to serve the purpose. Applications of SMA include  direct marketing \cite{Hansotia2002} and customer retention \cite{Manahan2005}. However, the Separate Model Approach, while simple and correct in principle, does not always perform well in real-world situation \cite{Lo2002}\cite{Radcliffe2011}. One reason for this is the mismatch between the objective used for training the models and the actual purpose of the models. When the uplift (difference in response between treatments) follows a distinct pattern from the response, SMA will focus on predicting the response rather than the \emph{uplift signal}. See \cite{Radcliffe2011} for an illustrative example. The situation is exacerbated when data is noisy and insufficient or when the uplift is much weaker than the response. Unfortunately, these are usually the cases for practical uplift applications. 

Seeing the shortcomings of the Separate Model Approach, researchers have proposed a number of algorithms that aim at directly modeling the uplift effect. However, almost all of them are designed for the situation with a single treatment. A logistic regression formulation is proposed which explicitly includes interaction terms between features and the treatment \cite{Lo2002}.  Support Vector Machine is adapted for uplift modeling to predict whether a subject will be positively, neutrally, or negatively affected by the treatment \cite{Zaniewicz2013}. The adaption of K-Nearest Neighbors for uplift modeling is briefly mentioned in both \cite{Alemi2009} and \cite{Su2012}. A new subject is simply assigned to the empirically best treatment as measured on the K training data that are closest to it. Several tree-based algorithms have been proposed for uplift modeling, each with a different splitting criterion  \cite{Chickering2000} \cite{Hansotia2002} \cite{Rzepakowski2010} \cite{Radcliffe2011} \cite{Guelman2014}. In \cite{Chickering2000}, the authors modify the standard decision tree construction procedure \cite{CART} by forcing a split on the treatment at each leaf node. In \cite{Hansotia2002} a splitting criterion is employed that maximizes the difference between the difference between the treatment and control probabilities in the left and right child nodes. In \cite{Rzepakowski2010},  splitting points are chosen that maximize the distributional difference between two child nodes as measured by a weighted Kullback-Leibler divergence and a weighted squared Euclidean distance. In \cite{Radcliffe2011}, a linear model is fitted to each candidate split and the significance of the interaction term is used as the measure of the split quality. In \cite{Guelman2014}, the variable that has the smallest p-value in the hypothesis test on the interaction between the response and itself is selected as the splitting variable. Then the splitting point is chosen to maximize the interaction effect. It is demonstrated experimentally that the use of Bagging or Random Forest on uplift trees  often results in significant improvement in performance \cite{Rzepakowski2015}. 

Despite its wide application, literature on the more general multi-treatment uplift problem is limited. Rare exceptions include \cite{Rzepakowski2012} and \cite{clark}. In  \cite{Rzepakowski2012},  the tree-based algorithm described in \cite{Rzepakowski2010} is extended to multiple treatment cases by using a weighted sum of pairwise distributional divergence as the splitting criterion. In \cite{clark}, a multinomial logit formulation is proposed in which treatments are incorporated as binary features. They also explicitly include the interaction terms between treatments and features.  Moreover, finite sample convergence guarantees are established for model parameters and out-of-sample performance guarantee. Both methods can handle binary as well as discrete response type. It is worth mentioning that  the causal K-nearest neighbors originally intended for single treatment can be naturally generalized to multiple treatments \cite{Alemi2009}  \cite{Su2012}. This algorithm is implemented in a R package called \emph{uplift} by Leo Guelman \cite{Ruplift}.

One of challenges facing uplift modeling is how to accurately evaluate model performance off-line using the randomized experiment data. For single treatment cases, \emph{qini curves} and \emph{uplift curves} have been used to serve the purpose \cite{Radcliffe2007} \cite{Rzepakowski2010}. The problem with them as performance metrics is that them do not measure the key quantity of interest - the expected response. What they measure is some surrogate quantity which is hopefully close to the increase in expected response relative to only applying the control. In \cite{Rzepakowski2010}, the authors explained that they use uplift curves because there does not seem to be a better option at the time. 

We now describe the contribution of our paper. In Section~\ref{sec2}, we discuss how to obtain an unbiased estimate of the expected response under an uplift model. The method applies to arbitrary number of treatments and general response types (binary, discrete, continuous). It works even when the treatments and the control are not evenly distributed which is often the case in practice. The method also allows us to compute the confidence interval of an estimate of the expected response. Furthermore, we introduce the \emph{modified uplift curve} which plots the expected response as a function of the percentage of population subject to treatments. As we discuss more in section~\ref{sec4-2}, the modified uplift curve provides a fair way to compare uplift models. 

Based on the new evaluation method, we propose a tree-construction procedure with a splitting criterion that explicitly optimizes the performance of the tree as measured on the training data. This idea is in line with the machine learning philosophy of loss minimization on the training set. We use an ensemble of trees to mitigate the overfitting problem that commonly happens with a single tree. We refer to our algorithm as the CTS algorithm where the name stands for Contextual Treatment Selection. The performance of CTS is tested on three benchmark data sets. The first is a  50-dimensional synthetic data set. The latter two are randomized experiment data provided by our industry collaborators. On all of the data sets, CTS demonstrates superior performance compared to other applicable methods which include Separate Model Approach with Random Forest/Support Vector Regression/K-Nearest Neighbors/AdaBoost, and Uplift Random Forest (upliftRF) as implemented in the R \emph{uplift} package \cite{Ruplift}.

The remainder of the paper is organized as follows. In Section~\ref{sec2}, we first introduce the formulation of the multi-treatment uplift modeling and then present the unbiased estimate of the expected response for an uplift model. Section~\ref{sec3} describes the CTS algorithm in detail. In Section~\ref{sec4} we present the setup and the results of the experimental evaluation. The modified uplift curve is also introduced in this Section. Section~\ref{sec5} ends the paper with a brief summary and ideas for future research.

\section{Evaluation of Uplift Models}\label{sec2}

Before introducing the evaluation method, we first describe the mathematical formulation of uplift problems and the notation used throughout this paper.

\subsection{Problem Formulation and Notation}\label{sec2-1}

We use upper case letters to denote random variables and lower case letters their realizations. We use boldface for vectors and normal typeface for scalers. 
\begin{itemize}
	\item $\mathbf{X}$ represents the feature vector and $\mathbf{x}$ its realization. Subscripts are used to indicate specific features. For example, $X_j$ is the $j$th feature in the vector and $x_j$ its realization. Let $\mathbb{X}^d$ denote the $d$-dimensional feature space. 
	
	\item $T$ represents the treatment. We assume there are $K$ different treatments encoded as $\{1,\ldots,K \}$. The control group is indicated by $T=0$. 
	
	\item Let $Y$ denote the response and $y$ its realization. Throughout this paper we assume the larger the value of $Y$, the more desirable the outcome. 
\end{itemize}

In the email marketing campaign example mentioned in the Introduction where the company wants to customize the messages to maximize the click through rate, $\mathbf{X}$ would be the charactering information of customers such as the browsing history, the purchase pattern, demographics, etc..  $T$ would be the different versions of the email message. And the response $Y$ would be the 1/0 variable indicating whether a customer click the message or not. 

Suppose we have a data set of size $N$ containing the joint realization of $(\mathbf{X}, T, Y)$ collected from a randomized experiment. We use superscript $(i)$ to index the samples as below.
$$
s_N = \big\{\, \big(\, \mathbf{x}^{(i)}, t^{(i)}, y^{(i)}  \,\big), i=1,\ldots,N \,\big\}.
$$

An uplift model $h$ is a mapping from the feature space to the space of treatments, or $h(\cdot): \mathbb{X}^d \rightarrow \{0, 1,\ldots,K\}$. The key performance metric of an uplift model is the expected value of the response if the model is used to assign the treatment, 
\begin{align}
\mathrm{E}[\, Y|T=h(\mathbf{X}) \,]. \label{eqn:E}
\end{align}

As is with classification and regression problems, the optimal expected response achievable in an uplift problem is determined by the underlying data generation process. The optimal expected response is achieved by a model $h^*$ that satisfies the point-wise optimal condition, i.e., $\forall \mathbf{x}\in \mathbb{X}^d$, 
\begin{equation}
h^*(\mathbf{x})\in \arg\max_{t=0,1,...,K}\; \mathrm{E}[Y|\mathbf{X}=\mathbf{x}, T=t]. \label{eqn:hstar}
\end{equation}

\subsection{Model Evaluation}\label{sec2-2}

One way of looking at an uplift model is that it partitions the entire feature space into disjoint subspaces and assigns each subspace to one treatment. With data from a randomized experiment, it is possible to estimate the probability of a sample falling into any subspace as well as the expected response in that subspace under the assigned treatment. Then by the law of total expectation we can estimate the expected response in the entire feature space.

In a randomized experiment, treatments are assigned randomly and independently from the features. However, treatments are not necessarily evenly distributed. Let $p_t$ denote the probability that a treatment is equal to $t$. In any meaningful situation we will have $p_t>0$ for $t=0,...,K$. 

\begin{lemma} Given an uplift model $h$, define a new random variable 
\begin{equation}
Z = \sum_{t=0}^K \frac{1}{p_t} Y \mathbb{I}\{h(\mathbf{X})=t\}  \mathbb{I}\{T=t\}
\end{equation}	
where $\mathbb{I}\{\cdot\}$ is the $0/1$ indicator function. Then 
$$
\mathrm{E}[Z] = \mathrm{E}[Y|T=h( \mathbf{X} )].
$$
\end{lemma}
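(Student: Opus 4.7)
The plan is to unfold the definition of $Z$ by linearity of expectation and then collapse each term using the randomization assumption, namely that $T$ is independent of $\mathbf{X}$ with $P(T=t) = p_t > 0$. The whole point of the $1/p_t$ weights is to cancel the probability that a sample from the randomized experiment actually received the treatment prescribed by $h$, and the plan is to surface that cancellation explicitly.

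First I would write
\begin{equation*}
\mathrm{E}[Z] \;=\; \sum_{t=0}^K \frac{1}{p_t}\,\mathrm{E}\bigl[\, Y\,\mathbb{I}\{h(\mathbf{X})=t\}\,\mathbb{I}\{T=t\} \,\bigr].
\end{equation*}
Next, for each fixed $t$ I would condition on $(\mathbf{X},T)$ using the tower property to pull $Y$ inside a conditional expectation:
\begin{equation*}
\mathrm{E}\bigl[Y\,\mathbb{I}\{h(\mathbf{X})=t\}\,\mathbb{I}\{T=t\}\bigr]
= \mathrm{E}\bigl[\mathrm{E}[Y\mid \mathbf{X},T]\,\mathbb{I}\{h(\mathbf{X})=t\}\,\mathbb{I}\{T=t\}\bigr].
\end{equation*}
On the event $\{T=t\}$ the inner conditional expectation is $\mathrm{E}[Y\mid \mathbf{X},T=t]$, which is a function of $\mathbf{X}$ alone. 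Since the randomization guarantees $T \perp\!\!\!\perp \mathbf{X}$, the joint expectation factors as
\begin{equation*}
\mathrm{E}\bigl[\mathrm{E}[Y\mid \mathbf{X},T=t]\,\mathbb{I}\{h(\mathbf{X})=t\}\bigr]\cdot P(T=t),
\end{equation*}
and $P(T=t) = p_t$ cancels the $1/p_t$ weight.

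After cancellation, summing over $t$ and observing that for each $\mathbf{x}$ exactly one indicator $\mathbb{I}\{h(\mathbf{x})=t\}$ is nonzero (namely $t=h(\mathbf{x})$), I collapse the sum to
\begin{equation*}
\mathrm{E}[Z] \;=\; \mathrm{E}_{\mathbf{X}}\!\bigl[\mathrm{E}[Y\mid \mathbf{X}, T=h(\mathbf{X})]\bigr],
\end{equation*}
which is precisely what the paper denotes $\mathrm{E}[Y\mid T=h(\mathbf{X})]$ (the expected response when each subject is assigned the model's recommended treatment).

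The main obstacle I anticipate is not any hard calculation but being careful about two notational subtleties. First, the randomization assumption $T \perp\!\!\!\perp \mathbf{X}$ must be invoked at exactly the right moment, because without it $P(T=t \mid \mathbf{X}) \neq p_t$ and the inverse-propensity weighting would not debias the estimator. Second, I need to interpret $\mathrm{E}[Y\mid T=h(\mathbf{X})]$ in the counterfactual sense $\mathrm{E}_{\mathbf{X}}[\mathrm{E}[Y\mid \mathbf{X}, T=h(\mathbf{X})]]$ rather than as a naive conditional expectation over the joint distribution of the experiment; this matches the definition used implicitly throughout Section~\ref{sec2-1}.
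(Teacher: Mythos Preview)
Your proposal is correct and follows essentially the same route as the paper: invoke the law of total expectation, use the randomization assumption $T\perp\!\!\!\perp\mathbf{X}$ to cancel $p_t$ against $1/p_t$, and collapse the sum over $t$. The paper conditions on $T=t$ directly whereas you condition on $(\mathbf{X},T)$ via the tower property, but this is only a cosmetic difference in the order of conditioning; your version is in fact more explicit about where independence enters and about the counterfactual reading of $\mathrm{E}[Y\mid T=h(\mathbf{X})]$, which the paper leaves implicit.
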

\begin{proof} The proof is straightforward using the law of total expectation.
\begin{align*}
& \mathrm{E} \big[Z\big] \\
=\; & \sum_{t=0}^K \; \frac{1}{p_t} \mathrm{E} \big[ Y \mathbb{I}\{\, h\big( \mathbf{X} \big) = t \,\} \big| T=t \big] P\{ T=t \}  \\
=\; & \sum_{t=0}^K \;\mathrm{E} \big[ Y \big| h(\mathbf{X})=t, T=t \big] P\{h(\mathbf{X})=t\} \\
=\; &\; \mathrm{E} \big[Y|h(\mathbf{X})=T \big]
\end{align*}
\end{proof}

Given a set of randomized experiment data $s_N = \big\{\, \big(\, \mathbf{x}^{(i)}, t^{(i)}, y^{(i)}  \,\big), i=1,\ldots,N \,\big\}$, computing the value of $z^{(i)}$ is simple. If for the $i$th sample the predicted treatment matches the actual treatment, then $z^{(i)}$ is equal to $y^{(i)}/p_t$, the actual response scaled by the reverse of the treatment probability. Otherwise, $z^{(i)}$ equals zero. It is well known that the sample average is an unbiased estimate of the expected value. Therefore we have the following theorem.

\begin{theorem}\label{thm1}
The sample average 
\begin{equation}
\bar{z}=\frac{1}{N}\sum_{i=1}^N z^{(i)}  \label{eqn:zbar}
\end{equation}  is an unbiased estimate of $\mathrm{E}[Y|T=h(\mathbf{X} )]$.
\end{theorem}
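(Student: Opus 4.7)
The plan is to reduce the theorem to the lemma proved immediately above, combined with the standard fact that the sample mean of i.i.d.\ copies of a random variable is unbiased for its expectation. Since the data come from a randomized experiment, the triples $(\mathbf{x}^{(i)}, t^{(i)}, y^{(i)})$ are i.i.d.\ draws from the joint distribution of $(\mathbf{X}, T, Y)$. The map
$$
(\mathbf{x}, t, y) \;\mapsto\; \sum_{s=0}^{K} \frac{1}{p_s}\, y\, \mathbb{I}\{h(\mathbf{x})=s\}\, \mathbb{I}\{t=s\}
$$
is a fixed measurable function (the model $h$ and the treatment probabilities $p_s$ are treated as given), so applying it sample-by-sample produces $z^{(1)}, \ldots, z^{(N)}$ that are i.i.d.\ copies of the random variable $Z$ defined in the lemma.

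From here, I would finish in two short steps: invoke linearity of expectation to write $\mathrm{E}[\bar{z}] = \frac{1}{N}\sum_{i=1}^N \mathrm{E}[z^{(i)}] = \mathrm{E}[Z]$, and then apply the preceding lemma to identify $\mathrm{E}[Z] = \mathrm{E}[Y|T=h(\mathbf{X})]$. Combining the two equalities gives $\mathrm{E}[\bar{z}] = \mathrm{E}[Y|T=h(\mathbf{X})]$, which is exactly the unbiasedness statement.

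There is essentially no obstacle — the substantive work was done in the lemma, which rewrote the conditional expectation $\mathrm{E}[Y|T=h(\mathbf{X})]$ as an unconditional expectation of a computable per-sample quantity. The only points worth noting explicitly are that the assumption $p_t > 0$ for every $t \in \{0,1,\ldots,K\}$ (stated in Section~\ref{sec2-2}) makes the weights $1/p_t$ well-defined, and that $h$ must be regarded as fixed and independent of the evaluation sample so that the $z^{(i)}$ remain i.i.d.; otherwise the linearity-of-expectation step would need to be revisited.
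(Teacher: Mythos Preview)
Your proposal is correct and matches the paper's own argument: the paper simply remarks, just before stating the theorem, that ``the sample average is an unbiased estimate of the expected value,'' so the theorem follows immediately from the preceding lemma plus linearity of expectation. You have spelled out precisely those two steps (and added the sensible caveats that $p_t>0$ and that $h$ is fixed relative to the evaluation sample), but the approach is identical.
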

Moreover, we can compute the confidence interval of $\bar{z}$ which also helps to estimate the possible range of $\mathrm{E}[Y|T=h(\mathbf{X})]$.

\section{The CTS Algorithm}\label{sec3}

Tree-based methods are time-tested tools in Machine Learning \cite{CART}. When combined into ensembles, they prove to be among the most powerful algorithms for general classification and regression problems \cite{jmlr_onehundred}. Even for the relatively new uplift modeling problem, there have been some reports on the excellent performance of tree ensembles  \cite{Rzepakowski2015}. 

The algorithm we present in this section also generates a tree ensemble. We refer to it as the \textbf{CTS} algorithm which stands for Contextual Treatment Selection. What is unique about CTS is its splitting criterion that directly maximizes the expected response from the tree as measured on the training set.

\subsection{Splitting Criterion}\label{sec3-1}

We take the recursive binary splitting approach. Each split creates two new branches further down the tree. Let $\phi$ be the feature space associated with a leaf node. The best we can do for subjects falling into $\phi$ is to assign the subspace-wise optimal treatment. Suppose $s$ is a candidate split that divides $\phi$ into the left child-subspace $\phi_l$ and the right child-subspace $\phi_r$. Because the two child subspaces can have different treatments, the added flexibility leads to increased expected response for subjects in $\phi$ overall. The increase is denoted as $\Delta \mu(s)$ as below.
\begin{align}
\label{eqn:deltamu}
&\Delta\mu(s) \\ 
= \; & P\{\mathbf{X}\in\phi_l | \mathbf{X}\in\phi \}\max_{t_l=0,...,K} \mathrm{E}[Y|\mathbf{X}\in\phi_l, T=t_l]   \nonumber \\
+ \,& P\{\mathbf{X}\in\phi_r | \mathbf{X}\in\phi \}\max_{t_r=0,...,K} \mathrm{E}[Y|\mathbf{X}\in\phi_r, T=t_r]   \nonumber \\
- \,&\max_{t=0,...,K} \mathrm{E}[Y|\mathbf{X}\in\phi, T=t] \nonumber.  
\end{align}

So the idea is straightforward. At each step in the tree-building process, we want to perform the split that brings about the greatest increase in expected response $\Delta\mu$. The question is how to estimate $\Delta\mu$ with training data. Let $\phi'$ stand for one of the child subspace $\phi_l$ or $\phi_r$.  We use $\hat{p}(\phi'|\phi)$ to denote the estimate for the conditional probability of a subject in $\phi'$ given that it is in $\phi$ , and $\hat{y}_t(\phi')$ the estimate for the conditional expected response in subspace $\phi'$ under treatment $t$.

For the conditional probability, we will simply use the sample fraction
\begin{align}
\hat{p}(\phi'|\phi) = &\; \frac{\sum_{i=1}^N \mathbb{I}\{\mathbf{x}^{(i)}\in\phi'\}}{\sum_{i=1}^N \mathbb{I}\{\mathbf{x}^{(i)} \in \phi\}}.
\end{align}

Estimating the conditional expectation requires more effort. First, the estimation is performed by treatments, therefore less data is available.  Second, treatments may not be evenly distributed. It is common to have only a small percentage of population subject to treatments in a randomized experiment. Let $n_t(\phi')$ be the number of samples in $\phi'$ with treatment equal to $t$. With two user-defined parameters $\mathtt{min\_split}$ and $\mathtt{n\_reg}$, the conditional expectation is estimated as follows. \\

\noindent
If $n_t(\phi') < \mathtt{min\_split}$,
\begin{equation}
\hat{y}_t(\phi') = \hat{y}_t(\phi),
\end{equation}
otherwise,
\begin{equation}
\hat{y}_t(\phi')  = \frac{  \sum_{i=1}^N y^{(i)}\mathbb{I}\{ \mathbf{x}^{(i)} \in \phi'\}  \mathbb{I}\{t^{(i)}=t \}  + \hat{y}_t(\phi)\cdot \mathtt{n\_reg}   }{\sum_{i=1}^N \mathbb{I}\{ \mathbf{x}^{(i)} \in \phi'\}  \mathbb{I}\{t^{(i)}=t \}  + \mathtt{n\_reg} } \label{eq:nreg}
\end{equation}

Note that $\hat{y}_t(\phi')$ is defined recursively -  the value of $\hat{y}_t(\phi')$ depends on the corresponding estimate for the parent node $\hat{y}_t(\phi)$. To initialize the definition, the estimated expectation for the root node $\hat{y}_t(\mathbb{X}^d)$ is set to the sample average. We assume there are at least enough samples to estimate expected response accurately in the root node, otherwise customizing treatment selection is impractical. The rational behind the estimation formula is twofold. First, if there are not enough samples for some treatment,  we simply inherit the estimation from the parent node. This mechanism, combined with the termination rules in Section~\ref{sec3-2}, allows the trees to grow to a full extent while ensuring reliable estimate of the expected response. Second, to avoid being misled by outliers, we add a regularity term to the sample average calculation. The larger the value of $\mathtt{n\_reg}$, the more samples it takes to shift the estimate from the parent-estimate $\hat{y}_t(\phi)$ to the actual sample average.  Based on our experiments, it is usually helpful to set $\mathtt{n\_reg}$ to a small positive integer.

To summarize, we estimate the increase in the expected response from a candidate split $s$ as below,
\begin{align}
\hat{\Delta\mu}(s)  = &\quad \hat{p}(\phi_l|\phi)\times \max_{t=0,...,K} \hat{y}_t(\phi_l)   \nonumber\\
& + \hat{p}(\phi_r|\phi) \times \max_{t=0,...,K} \hat{y}_t(\phi_r) -   \max_{t=0,...,K} \hat{y}_t(\phi). \label{eqn:criterion}
\end{align}
At each step of the tree-growing process, the split that leads to the highest estimated increase in expectation is performed.

\subsection{Termination Rules}\label{sec3-2}

Another important component of a tree-based algorithm is the termination rules. In CTS, a node is a terminal node if any of the following conditions is satisfied. The tree growing process terminates when no more splits can be made. 
\begin{enumerate}
\item The number of samples is less than the user-defined parameter $\mathtt{min\_split}$ for all treatments

\item There does not exist a split that leads to non-negative gain in the estimated expected response.

\item All the samples in the node have the same response value.
\end{enumerate}
The first condition allows us to split a node as long as there is at least one treatment containing enough samples. The second condition states that a split should not be executed if it damages the performance of the current tree. We allow a split with zero gain to be performed because a nonprofitable split for the current step may lead to profitable splits in future steps. The third condition is to avoid the split of pure node. Without condition 3), a split will be selected randomly when all samples have the same response value because all possible splits lead to zero gain.

\subsection{The Algorithm}\label{sec3-3}

To mitigate the overfitting problem commonly associated with a single tree, we formulate CTS in a form similar to Random Forest \cite{RandomForest}. A group of trees are constructed based on the splitting criterion and termination rules described previously. Each tree is built on a different bootstrapped training data set. At each step of the learning process, only a random subset of features are considered for splitting. A terminal node of a tree contains the estimation of expected response under each treatment for that node. Given a point in the feature space and a treatment, the predicted expected response from the forest is the average of the predictions from all the trees. The CTS algorithm is outlined in Algorithm~\ref{alg2}.

\begin{algorithm}[t]
	\caption{CTS - Contextual Treatment Selection}
	\begin{algorithmic}[2]
		\item[\bf{Input}:]training data $s_N$, number of samples in each tree \texttt{B} ($\texttt{B}\leq N$), number of trees \texttt{ntree}, number of variables to be considered for a split \texttt{mtry} ($1\leq \texttt{mtry} \leq d$), the minimum number of samples required for a split \texttt{min\_split}, the regularity factor \texttt{n\_reg}
		\item[\bf{Training}:]\\
		For $n=1:\texttt{ntree}$
		\begin{itemize}
			\item[1.] Draw \texttt{B} samples from $s_N$ with replacement to create $s_B^n$. Samples are drawn proportionally from each treatment.
			\item[2.] Build a tree from $s_B^n$. At each node, we draw \texttt{mtry} coordinates at random, then find the split with the largest increase in expected response among the \texttt{mtry} coordinates as measured by the splitting criterion defined in Eq.~(\ref{eqn:criterion}).
			\item[3.] The output of each tree is a partition of the feature space as represented by the terminal nodes, and for each terminal node, the estimation of the expected response under each treatment.
		\end{itemize}
		\item[{\bf Prediction}:] Given a new point in the feature space, the predicted expected response under a treatment is the average of the predictions from all the trees. The optimal treatment is the one with the largest predicted expected response.
	\end{algorithmic}
	\label{alg2}
\end{algorithm}

\section{Experimental Evaluation}\label{sec4}

In this section, we present an experimental comparison between the proposed CTS algorithm and other applicable uplift modeling methods on several benchmark datasets. The first dataset is generated from a 50-dimensional artificial data model. Knowing the true data model allows us to compare methods without worrying about model evaluation accuracy\footnote{Exact values of data model parameters and datasets can be found at this Dropbox link \url{https://www.dropbox.com/sh/sf7nu2uw8tcwreu/AAAhqQnaUpR5vCfxSsYsM4Tda?dl=0}}. Next, we compare the methods on two large-scale industry provided datasets. One is a single-treatment binary-response dataset on the purchase of priority boarding for flights. The other dataset is about the purchase of reserved seats on flights which has multi treatments and continuous response. On the latter two datasets we introduce the \emph{modified uplift curve} which is a convenient way of understanding the trade-off between the risk of exposing subjects to treatments and the gain from customizing treatment assignment. 

\subsection{Synthetic Data}\label{sec4-1}

The feature space is the fifty-dimensional hyper-cube of length 10, or $\mathbb{X}^{50}= [0, 10]^{50}$. Features are uniformly distributed in the feature space, i.e., $X_d \sim \mathrm{U}[\,0, 10\,]$, for $d=1,...,50$. There are four different treatments, $T=1,2,3,4$. The response under each treatment is defined as below. 
\begin{equation}
Y = \left\{
\begin{array}{rl}
f(X) + \mathrm{U}[0, \alpha X_1] + \epsilon  & \text{if } T=1, \\
f(X) + \mathrm{U}[0, \alpha X_2] + \epsilon & \text{if } T=2, \\
f(X) + \mathrm{U}[0, \alpha X_3] + \epsilon & \text{if } T=3, \\
f(X) + \mathrm{U}[0, \alpha X_4] + \epsilon & \text{if } T=4.
\end{array} \right.
\end{equation}
The response is the sum of three components. 
\begin{itemize}
	\item The first term $f(X)$ defines the systematic dependence of the response on the features and is identical for all treatments. Specifically, $f$ is chosen to be a mixture of $50$ exponential functions so that it is complex enough to reflect real-world scenarios. 
	\begin{align}
	& f(x_1,..., x_{50}) \\
	= &\sum_{i=1}^{50} a^i \cdot \exp \{ -b^i_1|x_1 - c^i_1|  - \cdots - b^i_{50}|x_{50} - c^i_{50}| \},\nonumber  
	\end{align}
	where $a^i$, $b^i_j$ and $c^i_j$ are chosen randomly. 

	\item The second term $\mathrm{U}[0, \alpha X_t]$ is the treatment effect and is unique for each treatment $t$. In many applications we would expect the treatment effect to be of a lower order of magnitude of the main effect, so we set $\alpha$ to be $0.4$ which is roughly $5\%$ of $\mathrm{E}[|f(X)|]$.
	
	\item  The third term $\epsilon$ is the zero-mean Gaussian noise, i.e. $\epsilon \sim \mathrm{N}(0, \sigma^2)$. Note that the standard deviation $\sigma$ of the noise term is identical for all treatment. $\sigma$ is set to $0.8$ which is twice the amplitude of the treatment effect $\alpha$.
	 
\end{itemize}
Under this particular data model, the expected response is the same for all treatments, i.e., $\mathrm{E}[Y|T=t] = 5.18$ for $t=1,2,3,4$. The expected response under the optimal treatment rule $\mathrm{E}[Y|T=h^*(X)]$ is $5.79$.

We compare the performance of 5 different methods that are applicable to multi-treatment uplift problems with continuous response. They are CTS, Separate Model Approach with Random Forest (SMA-RF), K-Nearest Neighbor (SMA-KNN), Support Vector Regressor with Radial Basis Kernel (SMA-SVR), and AdaBoost (SMA-Ada). CTS is implemented in \emph{R} by the authors. For other algorithms, we use the implementation in \emph{scikit-learn}, a popular machine learning library in Python. These algorithms are tested under increasing training data size, specifically 500, 2000, 4000, 8000, 16000, and 32000 samples per treatment. For each size, 10 training data sets are generated so that we can compute the margin of error of the results. The performance of a model is evaluated using Monte Carlo simulation and the true data model. All models are tuned carefully with validation or cross-validation. Detail of the parameter selection procedure specific to each algorithm can be found in the Appendix.

\begin{figure}[hbt]
	\centering
	\includegraphics[width=\linewidth]{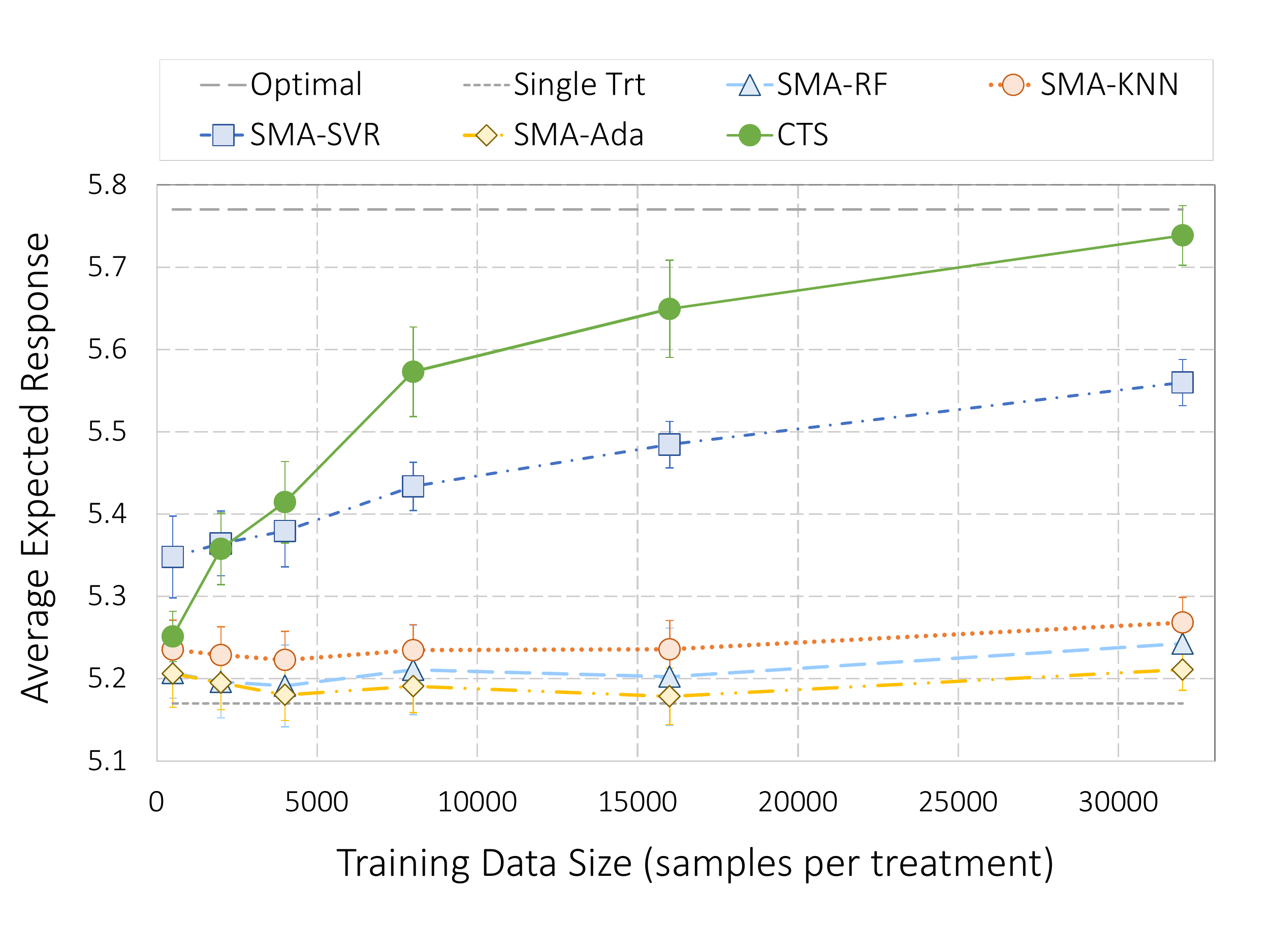}
	\caption{Averaged expected response of different algorithms on the synthetic data. The 95\% margin of error is computed with results from 10 different training datasets. For each data size, all algorithms are tested on the same 10 datasets.  }
	\label{fig:simulated_data}
\end{figure}

The performance of the 5 methods are plotted in Fig.~\ref{fig:simulated_data}. For reference, we also plot the single treatment expected response (short dash line without markers) and the optimal expected response (long dash line without markers). The vertical bars are the 95\% margin of error computed with results from 10 training datasets. To ensure consistency in comparison, for each data size, all the methods are tested with the same 10 datasets. From the figure we can see that CTS surpasses the separate model approach when data size is 2000, and the advantage continues to grow as the training size increases. At 32000 sample per treatment, the performance of CTS is very close to the oracle performance. Among the algorithms for the separate model approach, support vector regressor with radial basis kernel performs the best. This is not surprising considering the true data model is basically a mixture of exponentials. If the model for each treatment is accurate enough, the separate model approach can also create uplift. It is worth emphasizing the performance difference between CTS and Random Forest. The only essential difference between the two algorithms is the splitting criterion, and yet their performance is far from similar. Even with the largest training size, SMA-RF (dash line with triangle markers) does only slightly better than assigning a fixed treatment. This example again, shows the importance of developing specialized algorithms for uplift modeling.

\subsection{Priority Boarding Data}\label{sec4-2}

One application for randomized experiment is the online pricing problem where a treatment is a candidate price of the product. Customers are randomly assigned to different prices and the one that leads to the highest profit per customer is selected. A major European airline applied this method to select the price for the priority boarding of flights. The control is the default price \EUR{5} and the treatment is \EUR{7} . Interestingly, these two prices lead to the same revenue per passenger overall - \EUR{0.42}. 

With the help of our industry collaborators, we are able to access the randomized experiment data. After initial analysis, we confirm that the purchasing behavior of passengers varies significantly and it can be beneficial to customize price offering based on certain characteristics. A total of 9 features are derived based on the information of flight and of the reservation. These are the origin station, the origin-destination pair, the departure weekday, the arrival weekday, the number of days between flight booking and departure, flight fare, flight fare per passenger, flight fare per passenger per mile, and the group size. 

The data is randomly split into the training set (225,000 samples per treatment) and the test set (75,000 samples per treatment). Six methods are tested. They are the separate model approach with Random Forest (SMA-RF), Support Vector Machine (SMA-SVM), Adaboost (SMA-Ada), K-Nearest Neighbors (SMA-KNN), as well as the uplift Random Forest method implemented in \cite{Ruplift}, and CTS. For the first 5 methods, customer decision is modeled as binary response, 1 for purchase and 0 for non-purchase. Expected revenue is then calculated as the product of the purchase probability and the corresponding price. With CTS, we directly model the revenue as the (discrete) response. All the algorithms are carefully tuned with cross-validation. During cross-validation, the performance of a model is estimated on the hold-out set as measured by Eq.~(\ref{eqn:zbar}). See Appendix for detail on parameter tuning.

In many applications, exposing subjects to treatments involves a certain level of risk, such as disruption to customer experience, unexpected side effects, etc. As a result, we may want to limit the percentage of population exposed to treatment while still obtaining as much benefit from customization as possible. To measure the performance of an uplift model in this respect, we introduce the \textbf{modified uplift curve}, in which the horizontal axis is the percentage of population subject to treatments and the vertical axis is the expected response. Given an uplift model, we can compute, for each test subject, the difference in expected response under the predicted optimal treatment and the control. Then we rank the test subjects by the difference from high to low. For a given percentage $p$, we assign the top $p$ percent of the test subjects to their corresponding optimal treatment as predicted by the model, and the rest to the control. The expected response under this assignment is then estimated with Eq.~(\ref{eqn:zbar}). 

Fig.~\ref{fig:muc_pb} shows the modified uplift curves for the 6 methods under comparison on the priority boarding data. CTS performs the best at all population percentage. The upliftRF algorithm ranks the second and outperforms the separate model approach. The SMA-RF is very accurate in terms of identifying subpopulation for which the treatment is highly beneficial (the sharp rise at the beginning of the curve) or extremely harmful (the sharp decline at the end). Yet it fails to predict the treatment effect for the vast majority which is demonstrated by the (almost) straight line for the middle part. SMA-SVM and SMA-KNN perform poorly on this data set which we think partly due to their limitations in handling categorical variables.

\begin{figure}[hbt]
\centering
\includegraphics[width=\linewidth]{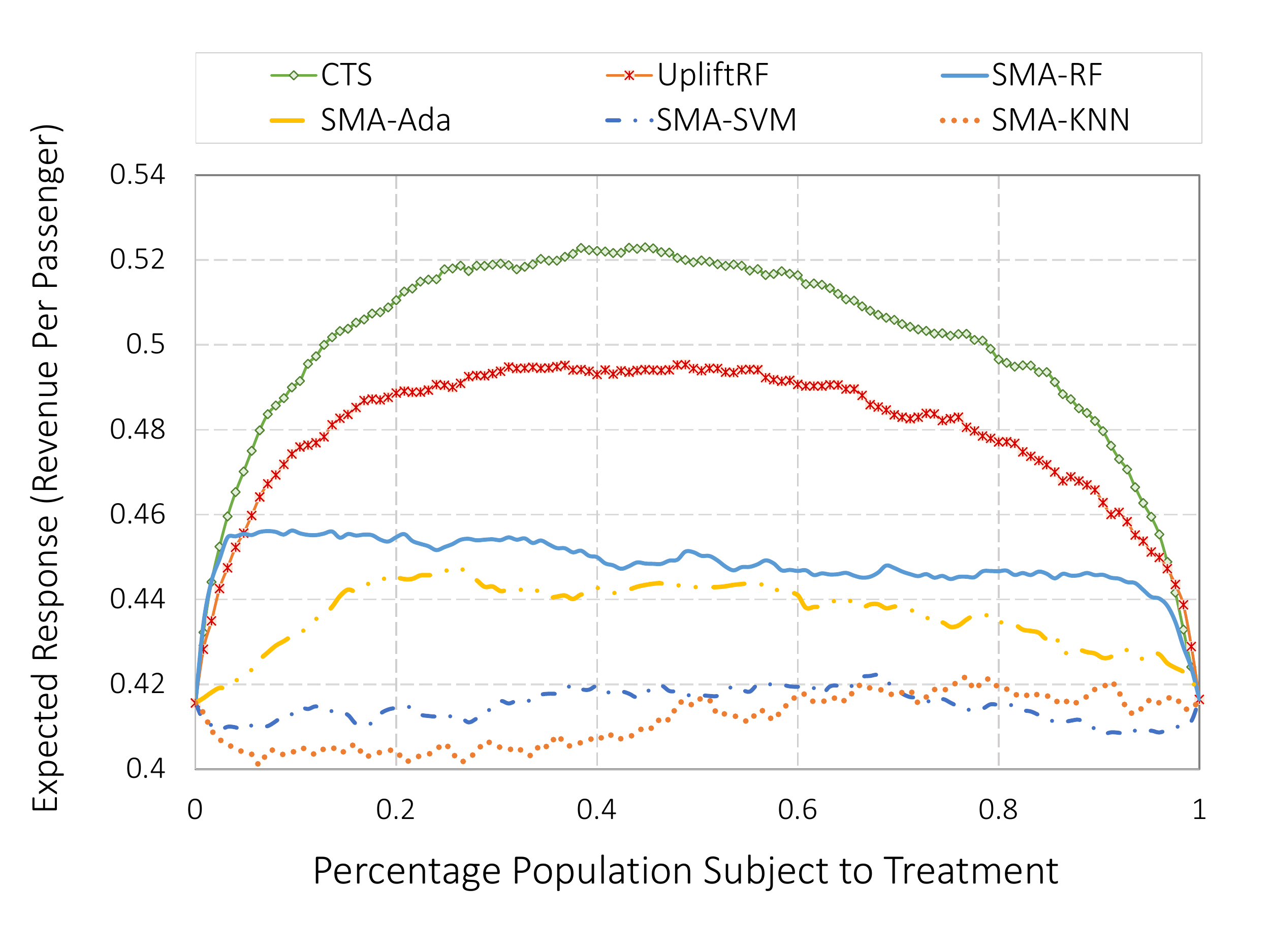}
\caption{Modified uplift curves of different algorithms for the priority boarding data.}
\label{fig:muc_pb}
\end{figure}

\begin{figure}[hbt]
\centering
\includegraphics[width=\linewidth]{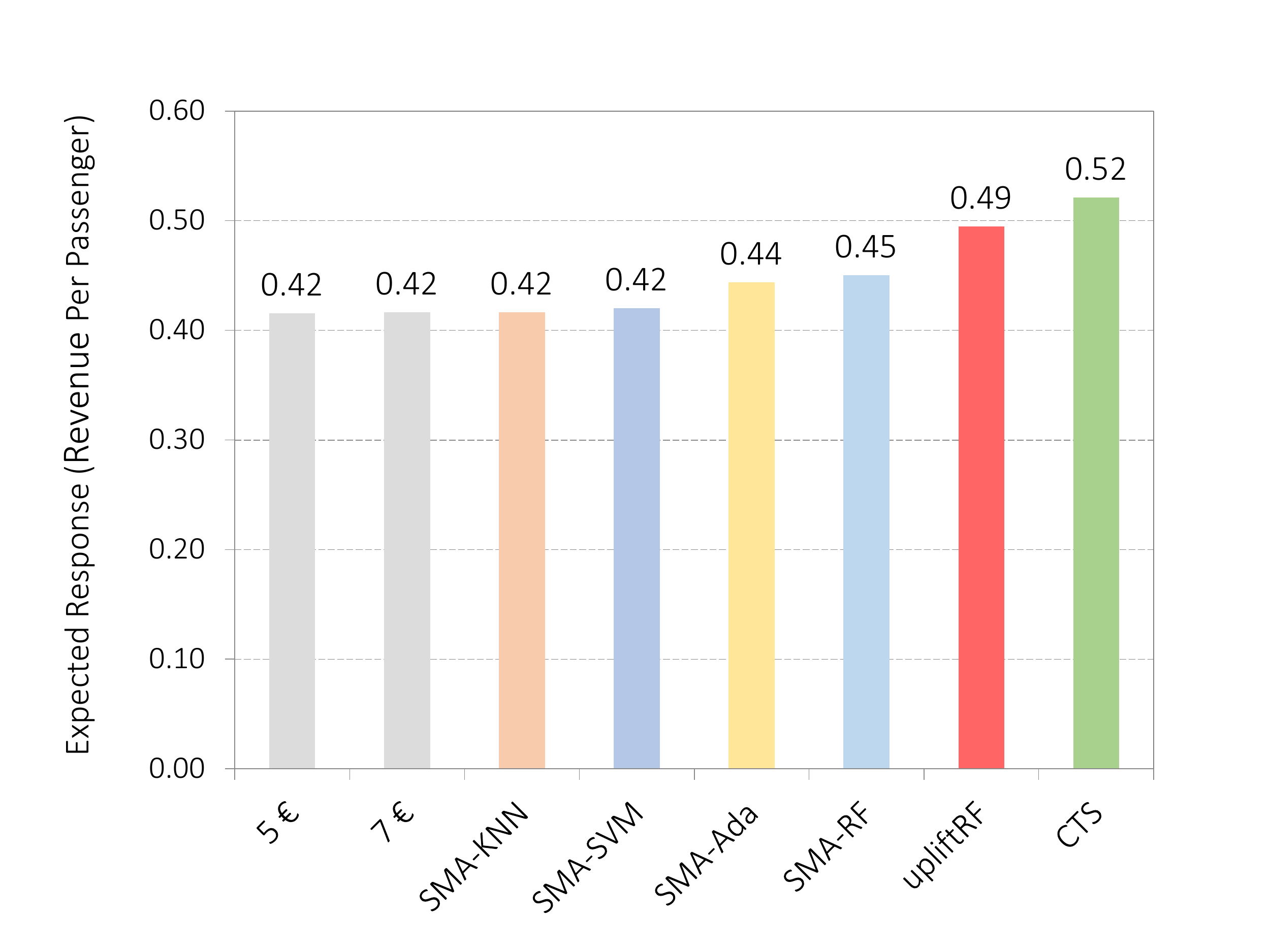}
\caption{Expected revenue per passenger from priority boarding based on different models.}
\label{fig:bar_pb}
\end{figure}

Fig.~\ref{fig:bar_pb} plots the expected revenue per passenger  if all the test subjects are assigned the predicted optimal treatment. As can be seen from the figure, customized pricing models can significantly increase the revenue. The increase in the revenue per passenger from \euro0.42 to \euro0.52 could lead to a remarkable gain in profit for an airline with tens of millions of scheduled passengers per year. This test case demonstrates the benefit of designing and applying specialized algorithms for uplift modeling.

\subsection{Seat Reservation Data}

Another airline experiments with seat reservation price. Treatments are price levels -  low (L), medium low (ML), medium high (MH) and high (H). The response is the revenue from each transaction. Because the same price level may correspond to different prices on different routes and one transaction may include the purchase of multiple seats, we need to model the response as a continuous variable. The features we use include the booking hour, the booking weekday, the travel hour, the travel weekday, the number of days between the last login and the next flight, the fare class, the zone code (all flight routes are divided into 3 zones, and prices are set for different zones), whether the passenger returns to the website after flight ticket purchase, the journey travel time, the segment travel time, the number of passengers, and the quantity available.

The number of samples for the four treatments are 213,488, 176,637, 160,576, 214,515. We use 50\% for training, 30\% for validation, and 20\% for test.  We compare the performance of CTS and SMA-RF in this test. We choose SMA-RF because it is the best among the Separate Model Approach on the priority boarding data.  UpliftRF is not included because it can not be applied to continuous response models. 

\begin{figure}[hbt]
	\centering
	\includegraphics[width=\linewidth]{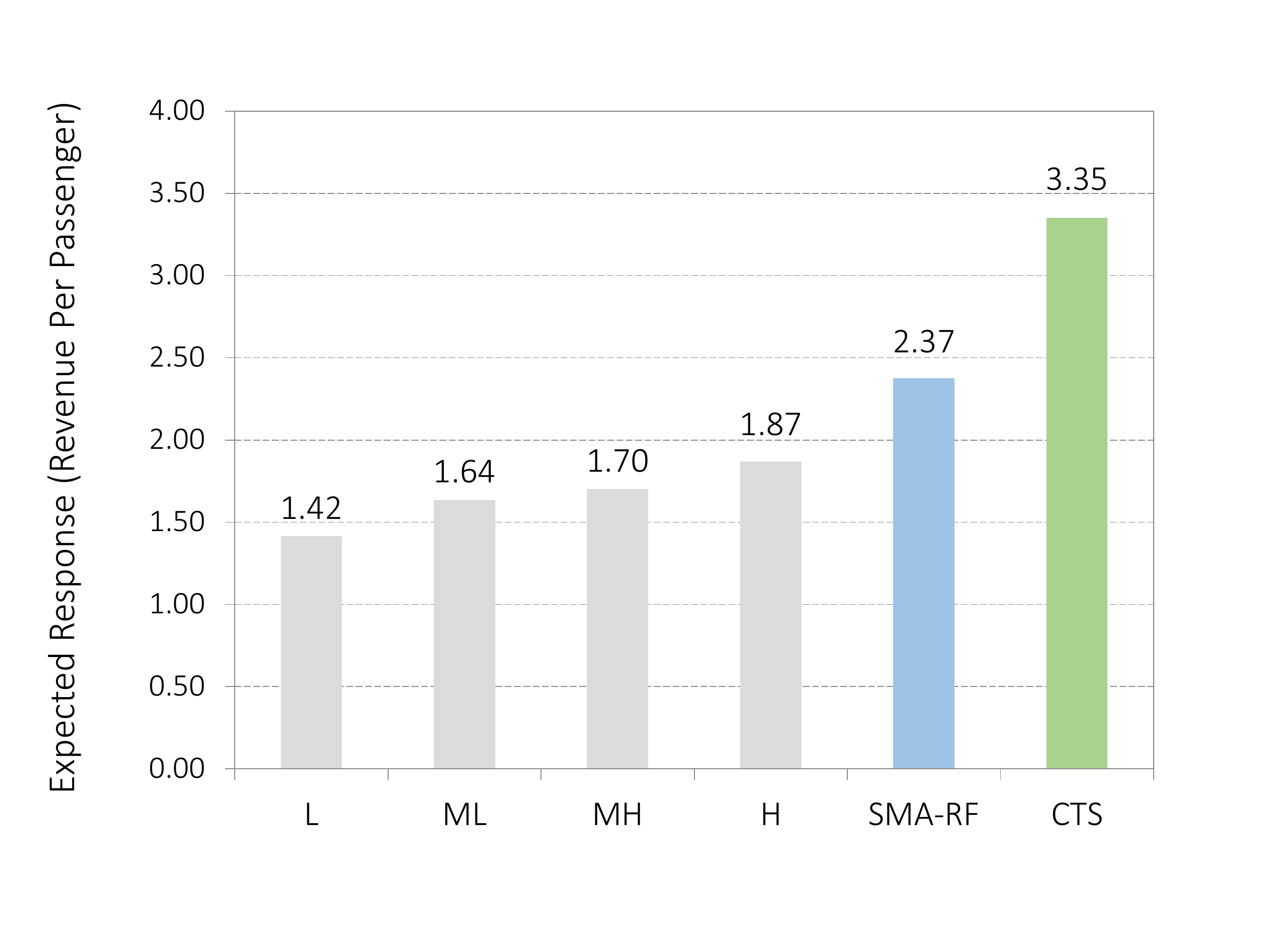}	\caption{Expected revenue per passenger from seat reservation when applying different pricing models.}
	\label{fig:bar_seat}
\end{figure}

\begin{figure}[hbt]
\centering
\includegraphics[width=\linewidth]{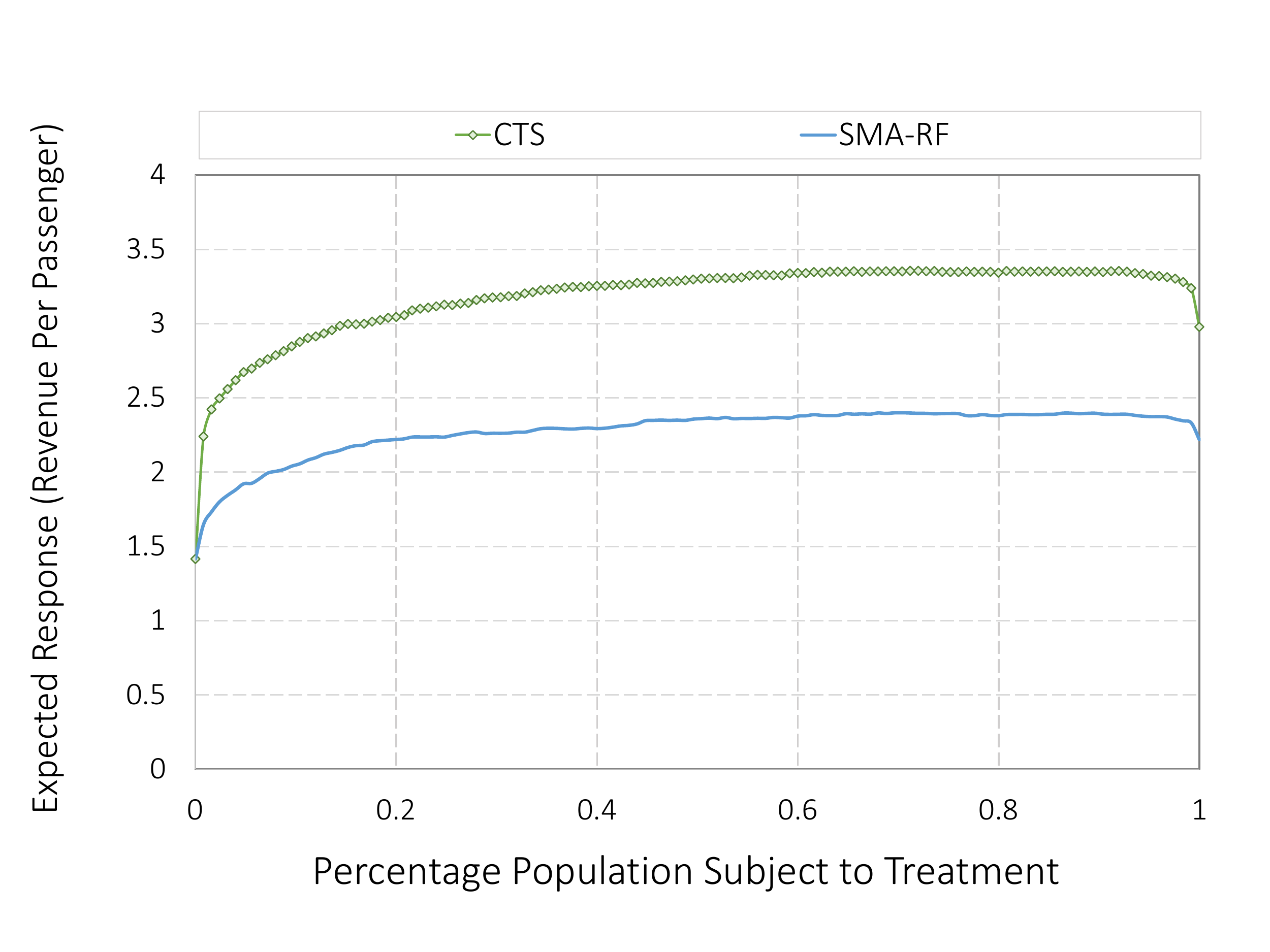}
	\caption{Modified uplift curves of SMA-RF and CTS on the seat reservation data.}
	\label{fig:muc_seat}
\end{figure}

The average revenue per customer with different pricing models is shown in Fig.~\ref{fig:bar_seat}.
The optimal single price level is H with an expected revenue of {\$}1.87 per passenger. By personalizing treatment assignment, we can achieve \$2.37 with SMA-RF and \$3.35 with CTS. Fig.~\ref{fig:muc_seat} shows the modified uplift curves of SMA-RF and CTS. We can see that CTS outperforms SMA-RF at every population percentage. By employing a specialized algorithm for uplift modeling the airline can significantly improve its profit margin.

\section{Conclusion}\label{sec5}

Uplift modeling initially gathered attention with its successful application in marketing and insurance. But it does not need to be restricted to these domains. Any situation where personalized treatment selection is desired and randomized experiment is possible can be a potential use case for uplift modeling. As an example, we described in Section~\ref{sec4} how to apply it to customized pricing. 

The contribution of this paper to Uplift Modeling is threefold. First, we present a way to obtain an unbiased estimate of the expected response under an uplift model which has not been available in the literature. Second, we design a tree ensemble algorithm with a splitting criterion based on the new estimation method. Both the unbiased estimate and the algorithm apply naturally to multiple treatments and continuous response, which significantly extends the current focus of uplift algorithms on single-treatment binary-response cases. Lastly, we showed that our algorithm lead to 15\% - 40\% more revenue than non-uplift algorithms with the priority boarding and seat reservation data, which demonstrated the impact of uplift modeling on  customized pricing.

\section*{Acknowledgment}
This work was supported in part by Accenture through the Accenture-MIT Alliance in Business Analytics.

%
%
%
%


\section*{Appendix}\label{appendix}

\subsubsection*{Synthetic Data} Here we describe the details of parameter tuning in Section~\ref{sec4}.

\textbf{CTS}: Fixed parameters are \texttt{ntree=100}, \texttt{mtry=25} and \texttt{n\_reg=3}. The value of \texttt{min\_split} is selected among \texttt{[25, 50, 100, 200, 400, 800, 1600, 3200, 6400]} (Large values are omitted when they exceeds dataset size). \texttt{min\_split} is selected by 5-fold cross-validation when training data size below 8000 sample-per-treatment, otherwise by validation (half training/half test) on one data set and kept the same for other nine data sets. 
	
\textbf{RF}: Fixed parameters are \texttt{n\_estimators=100} and \texttt{max\_features=25}. Parameter \texttt{nodesize} is tuned with 5-fold cross-validation among \texttt{[1,5,10,20]}.
	
\textbf{KNN}: Parameter \texttt{n\_neighbors} is tuned with 5-fold cross-validation among \texttt{[5,10,20,40]}.
	
\textbf{SVR}: The regularization parameter C and the value of the insensitive-zone $\epsilon$ are determined analytically using the method proposed in \cite{Cherkassky2004}. The spread parameter of the radial basis kernel $\gamma$ is selected among $[10^{-4},10^{-3},10^{-2},10^{-1}]$ using 5-fold cross-validation.
	
\textbf{Ada}: Square loss with \texttt{n\_estimators=100}.

\subsubsection*{Priority Boarding/Seat Reservation Data} Models are tuned similarly as with synthetic data except the following. 1. For CTS, upliftRF, and SMA-RF, \texttt{mtry=3}. 2. For CTS, \texttt{min\_split} is selected among \texttt{[25, 50, 100, 200]}. For upliftRF, \texttt{min\_split} is selected among \texttt{[5, 10, 20, 40]}. Parameter selection is conducted by validation because of the time constraint.

\end{document}